\DeclarePairedDelimiter\abs{\lvert}{\rvert}%
\newcommand{\A}{\mathbb A}
\newcommand{\E}{\mathop{\mathbb{E}}}
\DeclareMathOperator*{\argmax}{arg\,max}
\title{Stochastic Bandits with Delayed Composite Anonymous Feedback}
\author{Siddhant Garg \thanks{Equal Contributions to this work} \qquad Aditya Kumar Akash \footnotemark[1] \\
\{sgarg33, aakash@wisc.edu\} \\
University of Wisconsin-Madison 
}
\begin{document}

\maketitle

\begin{abstract}
We explore a novel setting of the Multi-Armed Bandit (MAB) problem inspired from real world applications which we call bandits with "\textit{stochastic delayed composite anonymous feedback (SDCAF)}". In SDCAF, the rewards on pulling arms are stochastic with respect to time but spread over a fixed number of time steps in the future after pulling the arm. The complexity of this problem stems from the anonymous feedback to the player and the stochastic generation of the reward. Due to the aggregated nature of the rewards, the player is unable to associate the reward to a particular time step from the past. We present two algorithms for this more complicated setting of SDCAF using phase based extensions of the UCB algorithm. We perform regret analysis to show sub-linear theoretical guarantees on both the algorithms.
\end{abstract}

\vspace{-0.5cm}
\section{Introduction}
\vspace{-0.25cm}
Multi-Armed Bandits (MAB) have been a well studied problem in machine learning theory for capturing the exploration-exploitation trade off in online decision making. MAB has applications to domains like e-commerce, computational advertising,  clinical trials, recommendation systems, etc.

In most of the real world applications, assumptions of the original theoretical MAB model like immediate rewards, non-stochasticity of the rewards, etc do not hold. A more natural setting is when the rewards of pulling bandit arms are delayed in the future since the effects of the actions are not always immediately observed. \cite{pmlr-v80-pike-burke18a} first explored this setting assuming stochastic rewards for pulling an arm which are obtained at some specific time step in the future. This setting is called \textit{delayed, aggregated, anonymous
feedback} (DAAF). The complexity of this problem stems from anonymous feedback to the model due to its inability to distinguish the origin of rewards obtained at a particular time from any of the previous time steps. 

This work was extended by adding a relaxation to the temporal specificity of observing the reward at one specific time in the future by \cite{pmlr-v75-cesa-bianchi18a}. The reward for pulling an arm can now be possibly spread adversarially over multiple time steps in the future. However they made an added assumption on the non-stochasticity of the rewards from each arm, thereby observing the same total reward for pulling the same arm each time. This scenario of non-stochastic composite anonymous feedback (CAF) can be applied to several applications, but it still does not cover the entire spectrum of applications. 

Consider a setting of a clinical trial where the benefits of different medicines on improving patient health are observed. CAF offers a more natural extension to this  scenario than DAAF since the benefits from a medicine can be spread over multiple steps after taking it rather than achieving it all at once at a single time step in the future. However, the improvements effects of the medicine might be different for different patients and thus assuming the same total health improvement for each time using a specific medicine is not very realistic. Inspired from this real world setting, we suggest that a more general bandit setting will be using CAF with the non-stochastic assumption dropped. We study such a MAB setting with stochastic delayed composite anonymous feedback (SDCAF).

For the SDCAF setting, a player has an option to chose one of K actions (bandit arms) at every time step. Once the player picks an action, a reward is generated at random from an underlying reward distribution for that action. Instead of receiving this reward in a single step, it is adversarially spread over this fixed number of time steps after the action was chosen. After every action choice, the player receives the sum total of all the rewards from the previous actions which are due at this particular step. The difficulty of this setting is due to the fact that the player does not know how this aggregated reward has been constituted from the previous actions chosen. Extending algorithms from the theoretical model of SDCAF to practical applications, involves obtaining guarantees on the rewards obtained from them. The regret of the algorithms refers to how much reward was lost on choosing a particular action over the optimal one. We aim to minimize the regret from plays of the bandit.

We present a phase based algorithm for this SDCAF setting which involves running a modified version of the UCB algorithm \cite{Auer:2002:FAM:599614.599677} in phases where the same arm is pulled multiple times in a single phase. This is motivated by the aim to reduce the error in approximating the arm mean due to extra and missing reward components from adjacent arm pulls. We prove sub-linear regret bounds for this algorithm. We also show that a modified version of ODAAF, a phase based improved UCB algorithm proposed in \cite{pmlr-v80-pike-burke18a}, can be used in our setting and achieves sub-linear regret bounds.

\vspace{-0.2cm}
\subsection{Related Work}
\vspace{-0.2cm}
Online learning with delayed feedback has been studied in the non-bandit setting by \cite{Weinberger:2006:DPI:2263240.2266940, Mesterharm2005OnlineLW, Langford:2009:SLF:2984093.2984354,pmlr-v28-joulani13, NIPS2015_5833, Joulani:2016:DOC:3016100.3016143, DBLP:journals/corr/GarrabrantST16} and in the bandit setting by \cite{DBLP:conf/nips/NeuGSA10, pmlr-v28-joulani13, DBLP:conf/aaai/MandelLBP15, DBLP:conf/colt/Cesa-BianchiGMM16, DBLP:conf/uai/VernadeCP17, pmlr-v80-pike-burke18a}. \cite{DBLP:journals/corr/abs-1106-2369} consider contextual stochastic bandits having a reward with a constant delay. The loss function of our setting is a generalization of the loss function of \cite{DBLP:journals/corr/DekelDKP14}. Gaussian process bandits with bounded stochastic delayed rewards were studied in \cite{JMLR:v15:desautels14a}. \cite{pmlr-v80-pike-burke18a} study bandits in the setting of delayed anonymous and aggregated rewards where the rewards are stochastically sampled from a distribution but received at some time step in the future. \cite{pmlr-v75-cesa-bianchi18a} study non-stochastic bandits where the rewards are spread adversarially over some time steps in the future after pulling the arm. 

\vspace{-0.2cm}
\section{Problem Definition}
\vspace{-0.2cm}
We consider a MAB setting with $K>1$ actions or arms in the set $A$ of all actions. At each time step $0\le t \le T$, the player chooses an action $i\in A$ and receives some reward depending on his past and current choices. Each action $i \in A$ is associated with a reward distribution $\nu_i$ which is supported in $[0,1]$, with mean $\mu_i$. Let $R_t(i)$ denote the total reward generated on choosing action $i$ at time step $t$ which is drawn from the distribution $\nu_i$. Note that $R_t(i)$ is not received by the player in its entirety at time step $t$, but rather spread over a maximum of $d$ time steps (including current time $t$) in any arbitrary manner. $R_t(i)$ is defined by the sum $\sum_{s=0}^{d-1}R_{t}^{(s)}(i)$ of $d$ components $R_{t}^{(s)} (i) \ge 0$ for $s = 0, \dots , d -1$, where $R_{t}^{(s)} (i)$ denotes the reward component obtained at time $t+s$ if action $i$ was chosen at time $t$. We refer to choosing an action and pulling an arm interchangeably in our analysis and use a similar notation to \cite{pmlr-v75-cesa-bianchi18a} for uniformity.

We define $X_t$ as the collective reward that the player obtains at time $t$. If the player chose action $i_{t-l}$ at time step $(t-l)$ where $l \in \{0,...,d-1\}$, we can write $X_t = \sum_{l=0}^{d-1} R_{t-l}^l(i_{t-l})$, which is the sum of contributions from all  actions which the player chose in the past. Only actions chosen in the past $(d-1)$ time steps affect the current reward $X_t$ obtained. We have $R_{t-l}^{l}(i) = 0$ for all $i$ and $l$ when $t-l < 0$.

\vspace{-0.3cm}
\section{Algorithms}
\vspace{-0.2cm}
We present two algorithms for this setting of SDCAF in Algorithm \ref{alg1} and Algorithm \ref{alg2} respectively. For Algorithm \ref{alg2} we only specify the additional inputs and initialization over Algorithm \ref{alg1}. We first provide the intuition behind the algorithms and then provide a formal regret analysis. 

Algorithm \ref{alg1} is a modified version of the standard UCB algorithm and is run in phases where the same arm is pulled multiple times along with maintaining an upper confidence bound on the reward from each arm. More specifically, each phase $m$ consists of two steps. In step 1, the arm with maximum upper confidence bound $i = \argmax_j B_j(m-1, \delta)$ is selected. In step 2, the selected arm $i$ is pulled $k$ times repeatedly. We track all time steps where arm $i$ is played till phase $m$ in the set $S_{i}(m)$. The rewards obtained are used to update the running estimate of the arm mean $\hat{\mu}_i(m)$. The intuition behind running the algorithm in phases is to gather sufficient rewards from a single arm so as to have a good estimate of the arm mean reward. This helps us bound the error in our reward estimate due to extra rewards from the previous phase and missing rewards which seep into the next phase due to delay. For every phase of the algorithm, the selected arm is pulled for a fixed number of times $k$. From our regret analysis, setting $k=O(\sqrt{T/\log(T)})$ achieves sub-linear regret.

Algorithm \ref{alg2} is a modification of the ODAAF algorithm proposed in \cite{pmlr-v80-pike-burke18a}, where we remove the \textit{bridge period} as it has no affect on the confidence bounds in the analysis. This is a modified version of the improved-UCB algorithm from \cite{Auer2010} and run in phases where a set $A$ of active arms is maintained, which is pruned based on the confidence on the arm mean estimates. Each phase $m$ consists of three steps. In the first step, an active arm $i \in A_m$ is sampled and then pulled for $n_m-n_{m-1}$ steps. All time steps where arm $i$ was played in the first $m$ phases are collected in the set $S_i(m)$. In the next step, an updated estimate $X_i(m)$ of the arm mean $\mu_i$ is computed. In the final step, the set of active arms is updated by elimination of arm $j$ if the calculated estimate $X_i(m)$ is $\Tilde{\Delta}_m$ smaller than $\max_{j \in A_m} X_j(m)$. The choice of $n_m$ ensures that with good probability the estimates $X_j(m)$ have bounded error. 

We now provide regret analysis for the algorithms and specify the choice of parameters $k$ and $n_m$.

\begin{minipage}{0.50\textwidth}
\begin{algorithm}[H]
\label{alg1}
\caption{Modified UCB}
\SetAlgoLined
\KwInput{A set of arms $\A = \{ 1, \dots, K\}$ \\ \qquad \quad A time horizon $T$ }
\KwInitial{   $S_i(0)=\phi \ \forall i \in \A$ \\ \hspace{2.15cm} $t=0$ \\ \hspace{2.15cm} $m=1 \{\text{Phase}\}$}
 \While{ $(m = 1,2,... ) \cap \ (t \le T)$}{
  \textbf{\underline{Step 1}}\\
  \textbf{If} $|S_i(m-1)|=0$ \textbf{then} \\
  \quad $B_i(m-1, \delta) =  \infty $ \\
  \textbf{Else} \\
  \quad $B_i(m-1, \delta) =  \hat{\mu_{i}}(m-1) + \sqrt{\frac{2log(\frac{1}{\delta}) }{T_i(m-1)}} $\\
  \textbf{Fi} \\
  Choose arm $i \in  \argmax_j B_j(m-1, \delta)$ \\
  \textbf{\underline{Step 2}}\\
  $S_j(m) \leftarrow S_j(m-1), \forall \ j \in \A$ \\
  \textbf{Repeat} k time steps\\
  \qquad Play arm $i$ \\
  \qquad Collect reward $X_t$ at time step $t$ \\
  \qquad $S_i(m) \leftarrow S_i(m) \cup \{t\}$ \\
  \qquad $t \leftarrow t+1$\\
  \textbf{end}\\
  $\hat{\mu_i}(m) = \frac{\sum_{s \in S_i(m)  } X_t}{|S_i(m)|}$ \\
  \vspace{2.5pt}
  $m \leftarrow m+1$
 }
\end{algorithm}
\end{minipage}
\hfill
\begin{minipage}{0.50\textwidth}
\begin{algorithm}[H]
\label{alg2}
\caption{Improved UCB (ODAAF \cite{pmlr-v80-pike-burke18a})}
\SetAlgoLined
\AKwInput{$n_m$ for each phase $m= \{1,2,\dots \}$}
\AKwInitial{ $A_1=A$}
 \While{$(m = 1,2,... ) \cap \ (t \le T)$}{
     \textbf{\underline{Step 1}}: Play arms\\
     \For{$i \in A_m$}{
          $S_j(m) \leftarrow S_j(m-1) , \forall \ j \in \A$\\
          \While{$|S_i(m)|\le n_m$ \textbf{and} $t \le T$}{
                Play arm i \\
                Receive reward $X_t$ at time step $t$\\
                $S_i(m) \leftarrow S_i(m) \cup \{t\}$\\
                $t \leftarrow t+1$\\
          }
      }
      \textbf{\underline{Step 2}}: Eliminate sub-optimal arms\\
      $X_i(m) = \frac{\sum_{t \in S_i(m)}X_t}{|S_i(m)|} , \ \forall \ i \in A_m$ \\ 
      $A_{m+1} \leftarrow A_{m}$\\
      \For{$i \in A_m$}{
        \textbf{If} $ X_i(m) + \Tilde{\Delta}_m < \max_{j \in A_m}X_j(m)$ \\
        \quad $A_{m+1} \leftarrow A_{m+1} - \{i\}$ \\
      }
      \textbf{\underline{Step 3}}: Decrease Tolerance\\
      $\Tilde{\Delta}_{m+1} \leftarrow \frac{\Tilde{\Delta}_m}{2}$ \\
      $m \leftarrow m+1$
 }
\end{algorithm}
\end{minipage}

\subsection{Regret Analysis for Algorithm 1}
The regret analysis closely follows from that of the UCB algorithm described in \cite{Bandit_Algorithms_Book}. Without loss of generality we assume that the first arm is optimal. Thus we have $\mu_1=\mu^*$, and define $\Delta_i = \mu^* - \mu_i$. We assume that the algorithm runs for $n$ phases. Let $T_i(n)=|S_i(n)|$ denote the number of times arm $i$ is played till phase $n$. We bound $\mathbb{E}[T_i(n)]$ for each sub-optimal arm $i$. For this we show that the following good event holds with a high probability bound
$$G_i = \bigg\{ \mu_1 \le \min_{m\in [n]} B_{1}(m,\delta)\bigg\} \cap \bigg\{ \hat{\mu}_{i}(u_i) + \sqrt{\frac{2}{T_i(u_i)}log(\frac{1}{\delta})} \le \mu_{1} \bigg\}$$
Here, $G_i$ is the event that $\mu_1$ is never underestimated by the upper confidence bound of the first arm, while at the same time the upper confidence bound for the mean of arm $i$, after $T(u_i)$ observations are taken from this arm, is below the payoff of the optimal arm. We make a claim that if $G_i$ occurs, then $T_i(n) \le T_i(u_i)$. Since we always have $T_i(n) \le T \; \forall i \in A$, the following holds 
\begin{equation*}
  \E[T_i (n)]=\E[\mathbbm{1}[G_i] T_i (n)] + \E[\mathbbm{1}[G_i^c]T_i(n)] \le T_i(u_i) + P(G_i^c)T  
\end{equation*}
Next we bound the probability of occurrence of the complement event $G_i^c$. We present a lemma to bound the difference between true estimate of mean and the approximate one used in the algorithm.

\begin{restatable}{lemma}{firstLemma}
\label{L1}
If 
$
\bar{\mu}_i(m) = \frac{1}{T_i(m)}\sum_{t\in S_i(m)} R_t(i)     
$ is an unbiased estimator of $\mu_i$ for $m^{th}$ phase, then the error from the estimated mean can be bound as $\abs{\hat{\mu}_{i}(m) - \bar{\mu}_{i}(m)} \le \frac{d}{k}$ where $S_i(m)$ is the set of time steps when arm $i$ was played and $T_i(n)=|S_i(n)|$.
\end{restatable}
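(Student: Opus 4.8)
The plan is to work directly from the definitions: since $S_i(m)$ indexes exactly the time steps at which arm $i$ was pulled up to phase $m$,
\[
\hat{\mu}_i(m) - \bar{\mu}_i(m) = \frac{1}{T_i(m)} \sum_{t \in S_i(m)} \big( X_t - R_t(i) \big),
\]
so everything reduces to bounding the total discrepancy between the reward \emph{collected} at those time steps and the reward \emph{generated} by those pulls. The structural fact I would exploit is that, in Algorithm~\ref{alg1}, $S_i(m)$ is a disjoint union of ``blocks'' of $k$ consecutive time steps, one per phase in which arm $i$ was selected; hence there are exactly $T_i(m)/k$ such blocks. I would show each block contributes at most $d-1$ in absolute value to $\sum_{t\in S_i(m)}(X_t - R_t(i))$, then sum over blocks and divide by $T_i(m)$.

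For the per-block estimate, fix a block $B = \{a, a+1, \dots, a+k-1\}$ on which arm $i$ is pulled at every step. Expanding $X_t = \sum_{l=0}^{d-1} R_{t-l}^{l}(i_{t-l})$ and splitting the double sum $\sum_{t\in B} X_t$ according to whether the originating pull time $t-l$ lies inside $B$ or strictly before $a$, one obtains an exact identity: $\sum_{t\in B} X_t$ equals (components of pulls in $B$ that land inside $B$) plus (``extra'' components leaking in from pulls before $a$). Comparing the first term with $\sum_{t\in B} R_t(i)$, the only components of $R_t(i)$, $t\in B$, not landing inside $B$ are those of the last $d-1$ pulls, whose reward windows spill past $a+k-1$ (the ``missing'' rewards). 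So $\sum_{t\in B}(X_t - R_t(i)) = (\text{extra}) - (\text{missing})$, purely algebraically.

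It then remains to bound the two non-negative quantities. The ``extra'' term only involves reward components $R_{t'}^{l}(i_{t'})$ with $t' \in \{a-d+1, \dots, a-1\}$; grouping by $t'$ and using $\sum_{l} R_{t'}^{l}(i_{t'}) = R_{t'}(i_{t'}) \le 1$ gives extra $\le d-1$, and identically missing $\le d-1$. Since both are $\ge 0$, each block contributes at most $d-1$ in absolute value, so summing over the $T_i(m)/k$ blocks and dividing by $T_i(m)$ yields $\abs{\hat{\mu}_i(m) - \bar{\mu}_i(m)} \le \frac{d-1}{k} \le \frac{d}{k}$. (That $\bar{\mu}_i(m)$ is unbiased for $\mu_i$ is immediate, since $S_i(m)$ indexes exactly $T_i(m)$ i.i.d.\ draws $R_t(i)\sim\nu_i$.)

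I expect the main obstacle to be the bookkeeping when arm $i$ is pulled in several consecutive phases: then the rewards ``missing'' from one block are precisely the rewards that appear as ``extra'' in the next block, and a naive argument risks double-counting. The per-block ``extra minus missing'' identity is exactly what makes this harmless, since it is an exact identity for each block regardless of which arm precedes it. Two minor points to dispatch: the first block has no predecessor, so its ``extra'' term is $0$, and the final block may be truncated by the horizon $T$ — both only improve the bound. One also implicitly uses $k \ge d$ so that a single pull's reward window meets at most two adjacent blocks; this holds for the prescribed $k = O(\sqrt{T/\log T})$ once $T$ is large, and in any case affects only the constant, not the $O(d/k)$ rate.
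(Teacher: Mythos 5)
Your proof is correct and follows essentially the same route as the paper's: decompose $S_i(m)$ into per-phase blocks of $k$ pulls, bound the discrepancy between generated and collected rewards in each block by pairing the ``extra'' components leaking in from earlier pulls against the ``missing'' components spilling out, and divide by the $k$ pulls per block. Your version is in fact somewhat more careful than the paper's one-line justification (and yields the marginally sharper per-block constant $d-1$ rather than $d$), but the underlying argument is the same.
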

 The proof of Lemma \ref{L1} follows from the fact that in each phase the missing rewards from the current phase and extra reward components from the previous phase can be paired up and the maximum difference that we can obtain between them is at most one. We use Lemma-\ref{L1} to bound $P(G_i^c)$ and obtain \textbf{$k=$} $O(\sqrt{T/\log(T)})$. This gives us an upper bound on the number of times a sub-optimal arm is played $\mathbb{E}[T_i(n)] \le \frac{289\log{(T)}}{4\Delta_i^2} + \frac{d}{2}\sqrt{\frac{T}{\log(T)}} + 2$. 

\begin{restatable}{theorem}{firstTheorem}
\label{T1}
For the choice of $k=O(\sqrt{T/\log(T)})$, the regret of Algorithm \ref{alg1} is bounded by $O\bigg(\sqrt{TK\log(T)} + Kd\sqrt{T/\log(T)}\bigg)$.
\end{restatable}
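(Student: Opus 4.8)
The plan is to reduce Theorem~\ref{T1} to the standard worst-case (gap-free) UCB argument, feeding in the per-arm pull bound $\E[T_i(n)] \le \frac{289\log(T)}{4\Delta_i^2} + \frac{d}{2}\sqrt{T/\log(T)} + 2$ that was already obtained from Lemma~\ref{L1} and the bound on $P(G_i^c)$.

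\textbf{Step 1 (regret decomposition).} Write $R_T = T\mu^* - \E\big[\sum_{t=0}^{T} X_t\big]$. Every play occurs at a time step $\le T$, each reward component lies in $[0,1]$, and the reward of a play is spread over at most $d$ steps, so $\sum_{t=0}^{T} X_t$ differs from $\sum_{\text{plays } s} R_s(i_s)$ by at most $d$ (only the last $d-1$ plays can have reward mass falling beyond the horizon), and the phase structure forces the total number of plays $N = \sum_i T_i(n)$ to satisfy $T+1-k \le N \le T+k$. Taking expectations and using $\E[R_s(i_s)\mid i_s] = \mu_{i_s}$ gives
\[
R_T \le \sum_{i:\Delta_i>0}\Delta_i\,\E[T_i(n)] + O(d+k).
\]
Since $k = O(\sqrt{T/\log(T)})$, the $O(d+k)$ term is dominated by the claimed bound.

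\textbf{Step 2 (substitute the pull bound and peel over the gaps).} Plugging in the bound on $\E[T_i(n)]$ yields
\[
\sum_{i:\Delta_i>0}\Delta_i\,\E[T_i(n)] \le \sum_{i:\Delta_i>0}\Big(\tfrac{289\log(T)}{4\Delta_i} + \tfrac{d\Delta_i}{2}\sqrt{T/\log(T)} + 2\Delta_i\Big).
\]
Because $\Delta_i \le 1$ and there are at most $K$ suboptimal arms, the last two groups sum to $O\big(Kd\sqrt{T/\log(T)}\big)$, so only $\sum_i \frac{289\log(T)}{4\Delta_i}$ needs the usual threshold trick. Fix $\Delta>0$; arms with $\Delta_i \le \Delta$ contribute at most $\Delta\sum_i T_i(n) \le \Delta(T+k)$ to the true regret (bounding their regret directly rather than through $\E[T_i(n)]$), while arms with $\Delta_i > \Delta$ contribute at most $\frac{289K\log(T)}{4\Delta}$. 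Hence
\[
R_T \le \Delta T + \frac{289K\log(T)}{4\Delta} + O\!\big(Kd\sqrt{T/\log(T)}\big),
\]
and choosing $\Delta = \tfrac{17}{2}\sqrt{K\log(T)/T}$ balances the first two terms at $\tfrac{17}{2}\sqrt{KT\log(T)}$ each, giving $R_T = O\big(\sqrt{KT\log(T)} + Kd\sqrt{T/\log(T)}\big)$ as claimed.

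\textbf{Main obstacle.} Given Lemma~\ref{L1} and the stated bound on $\E[T_i(n)]$, Step~2 is essentially textbook UCB peeling and optimization; the only genuinely new bookkeeping is Step~1 --- relating the cumulative \emph{anonymous, delayed, composite} reward actually collected over $[0,T]$ to the per-arm play counts, i.e.\ controlling the reward that spills past the horizon and the overshoot of the final phase. One must also verify that $k = O(\sqrt{T/\log(T)})$ (and not something larger) keeps these correction terms strictly lower-order than both $\sqrt{KT\log(T)}$ and $Kd\sqrt{T/\log(T)}$; this is what pins down the choice of $k$ in the theorem.
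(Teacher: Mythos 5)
Your proposal is correct and follows essentially the same route as the paper: plug the bound $\E[T_i(n)] \le \frac{289\log(T)}{4\Delta_i^2} + \frac{d}{2}\sqrt{T/\log(T)} + 2$ into the gap-based regret decomposition, split the arms at a threshold $\Delta$, and optimize $\Delta = \frac{17}{2}\sqrt{K\log(T)/T}$ (identical to the paper's $\sqrt{289K\log(T)/(4T)}$) to get $O(\sqrt{TK\log(T)} + Kd\sqrt{T/\log(T)})$. Your Step 1, relating the realized anonymous composite reward over $[0,T]$ to $\sum_i \Delta_i\E[T_i(n)]$ up to $O(d+k)$ spillover and overshoot terms, is a piece of bookkeeping the paper simply asserts by writing the regret as $\sum_i \Delta_i\E[T_i(n)]$ in equation (\ref{EE1}); it is a welcome addition but does not change the argument.
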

The proof of Theorem \ref{T1} proceeds by plugging in the upper bound on $\mathbb{E}[T_i(n)]$ in the UCB regret analysis. We refer the readers to Appendix \ref{AppendixA} for the detailed regret analysis of Algorithm \ref{alg1} and proofs of Lemma \ref{L1} and Theorem \ref{T1} .

\vspace{-0.2cm}
\subsection{Regret Analysis for Algorithm 2}
\vspace{-0.2cm}
We use the regret analysis from Appendix-F of \cite{pmlr-v80-pike-burke18a} where it is used for the setting of aggregate feedback with bounded delays. A similar analysis works for our setting of SDCAF. For completeness, we restate the analysis components, lemmas and theorems for composite rewards here. We first bound the difference between estimators for the arm mean $\mu_i$
\begin{restatable}{lemma}{secondLemma}
\label{L2}
If $\Tilde{\mu}_i(m)=\frac{1}{T_i(m)}\sum_{t\in S_i(m)}R_t(i)$ is an unbiased estimator for $\mu_i$ in $m^{th}$ phase, then we can bound the difference of $\Tilde{\mu}_i(m)$ with the estimator $X_i(m)$ used in Algorithm \ref{alg2} as $\abs{\Tilde{\mu}_j(m) - X_{i}(m)} \le  \frac{m(d-1)}{n_m} $ where each arm is pulled $n_m$ times till phase $m$, $S_i(m)$ is the set of time steps when arm $i$ was played and $T_i(m)=|S_i(m)|=n_m$,  $X_i(m)$ is the arm mean estimate computed from the delayed reward components.
\end{restatable}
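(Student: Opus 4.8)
The plan is to follow the proof of Lemma~\ref{L1}, adapted to the block structure of Algorithm~\ref{alg2}: here arm $i$ is pulled in at most $m$ separate contiguous stretches (one per phase in which it is still active), rather than in a single run. By definition $X_i(m) - \Tilde{\mu}_i(m) = \frac{1}{n_m}\big(\sum_{t \in S_i(m)} X_t - \sum_{t \in S_i(m)} R_t(i)\big)$ since $T_i(m) = n_m$, so it suffices to bound the numerator by $m(d-1)$. First I would substitute $X_t = \sum_{l=0}^{d-1} R_{t-l}^{l}(i_{t-l})$ and $R_t(i) = \sum_{s=0}^{d-1} R_t^{s}(i)$ and reindex the first sum by the pull time $\tau = t-l$, so that the numerator becomes a signed sum over individual reward components $R_{\tau}^{l}$. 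A component $R_\tau^l$ cancels between the two sums whenever its pull time $\tau$ and its receipt time $\tau + l$ both lie in $S_i(m)$ (in particular all $l=0$ terms cancel); what remains are the \emph{extra} components (received inside $S_i(m)$ but generated by a pull outside it, contributing with $+$ sign) and the \emph{missing} components (generated by a pull in $S_i(m)$ but received at a time outside it, contributing with $-$ sign).

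Next I would localize these surviving components. Writing $S_i(m)$ as a union of at most $m$ contiguous intervals, a pull can generate a missing component only if it lies within the last $d-1$ positions of one of these intervals, and a pull can generate an extra component for a given interval only if it lies in the $d-1$ positions immediately preceding that interval. Since every pull emits total reward $R_\tau(\cdot) \le 1$ split into nonnegative pieces, the missing mass attributable to the right boundary of any one interval is at most $d-1$, and the extra mass entering the left boundary of any one interval is at most $d-1$. Summing over the $\le m$ intervals, the aggregate extra mass $E$ and aggregate missing mass $M$ satisfy $0 \le E \le m(d-1)$ and $0 \le M \le m(d-1)$; since the numerator equals $E-M$ this gives $\big|\sum_{t\in S_i(m)} X_t - \sum_{t\in S_i(m)} R_t(i)\big| \le m(d-1)$, and dividing by $n_m$ proves the lemma.

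I expect the main obstacle to be the bookkeeping in the first two steps rather than any single hard estimate: one must verify precisely which components survive the cancellation, confirm that every survivor is attached to one of the $O(m)$ interval boundaries, and handle the edge cases — the first interval (which has no pulls before it, hence no extra mass) and a final interval truncated by the horizon $T$ (whose spill-over is counted as missing). Once the mismatched components are confined to these boundaries, the per-boundary bound of $d-1$ follows immediately from $R_\tau(\cdot) \le 1$, so unlike Lemma~\ref{L1} no explicit pairing of extra with missing components is needed — bounding each of $E$ and $M$ separately suffices.
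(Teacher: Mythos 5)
Your proposal is correct and follows essentially the same route as the paper: decompose $S_i(m)$ into the $m$ contiguous per-phase blocks of pulls of arm $i$, observe that the mismatch between generated and received reward is confined to the $d-1$ boundary positions of each block and is at most $d-1$ per block, and sum over the $m$ blocks before dividing by $n_m$. The only (immaterial) difference is that the paper bounds each block's discrepancy by pairing missing with extra components, whereas you bound the aggregate extra mass $E$ and missing mass $M$ separately and use $\abs{E-M}\le\max(E,M)$; both yield the same $m(d-1)$ bound.
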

\textbf{Choice of $n_m$}: We use $n_m= O\bigg( \frac{\log(T\Tilde{\Delta}_m^2)}{\Tilde{\Delta}_m^2} + \frac{md}{\Tilde{\Delta}_m} \bigg)$ similar to that in \cite{pmlr-v80-pike-burke18a}. This ensures a small probability for the event that after $m$ phases a suboptimal arm is still in the active set $A_m$. This bounds the regret contribution of all suboptimal arms. The exact expression is given in Appendix \ref{AppendixB}.
\begin{restatable}{theorem}{secondTheorem}
\label{T2}
For the choice of $n_m= O\bigg( \frac{\log(T\Tilde{\Delta}_m^2)}{\Tilde{\Delta}_m^2} + \frac{md}{\Tilde{\Delta}_m} \bigg)$ the regret of Algorithm \ref{alg2} is bounded by $O\bigg(\sqrt{TK\log(K)}+Kd\log(T)\bigg)$.
\end{restatable}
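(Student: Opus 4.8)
The plan is to follow the elimination-based analysis of the improved-UCB algorithm (as in \cite{Auer2010} and Appendix~F of \cite{pmlr-v80-pike-burke18a}), modifying only the concentration step so that it absorbs the composite-reward error bounded in Lemma~\ref{L2}. First I would set up the central good event. For a fixed phase $m$ and an arm $i$ active in it, $\Tilde{\mu}_i(m)$ is an average of $n_m$ independent draws from $\nu_i\subseteq[0,1]$, so Hoeffding's inequality gives $\abs{\Tilde{\mu}_i(m)-\mu_i}\le\sqrt{\log(2/\delta_m)/(2n_m)}$ with probability at least $1-\delta_m$. Combining this with the deterministic bound $\abs{\Tilde{\mu}_i(m)-X_i(m)}\le m(d-1)/n_m$ from Lemma~\ref{L2} yields $\abs{X_i(m)-\mu_i}\le\sqrt{\log(2/\delta_m)/(2n_m)}+m(d-1)/n_m$. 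The choice $n_m=O\!\big(\frac{\log(T\Tilde{\Delta}_m^2)}{\Tilde{\Delta}_m^2}+\frac{md}{\Tilde{\Delta}_m}\big)$ with $\delta_m\asymp 1/(T\Tilde{\Delta}_m^2)$ is calibrated precisely so that each of these two terms is at most $\Tilde{\Delta}_m/4$; hence on the event $\mathcal{G}$ that $\abs{X_i(m)-\mu_i}\le\Tilde{\Delta}_m/4$ holds simultaneously for every phase and every arm active in it, all estimates are accurate. Since $\Tilde{\Delta}_m=\Tilde{\Delta}_1 2^{-(m-1)}$ halves each phase and pulling stops once $t>T$, there are only $M=O(\log T)$ phases, and a union bound gives $P(\mathcal{G}^c)\le\sum_{m\le M}\sum_i\delta_m=O\!\big(\sum_i 1/(T\Delta_i^2)\big)$, small enough that its regret contribution is lower order.

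Next, working on $\mathcal{G}$: (i) the optimal arm is never eliminated, because for any active $j$ at phase $m$ we have $X_j(m)\le\mu_1+\Tilde{\Delta}_m/4$ while $X_1(m)\ge\mu_1-\Tilde{\Delta}_m/4$, so $\max_j X_j(m)-X_1(m)\le\Tilde{\Delta}_m/2<\Tilde{\Delta}_m$; and (ii) a suboptimal arm $i$ is eliminated at the first phase $m_i$ with $\Tilde{\Delta}_{m_i}<c\,\Delta_i$ for an absolute constant $c$, since then $\max_j X_j(m_i)-X_i(m_i)\ge\Delta_i-\Tilde{\Delta}_{m_i}/2>\Tilde{\Delta}_{m_i}$. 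By the halving, $\Tilde{\Delta}_{m_i}=\Theta(\Delta_i)$ and $m_i=O(\log(1/\Delta_i))=O(\log T)$, so on $\mathcal{G}$ the number of pulls of arm $i$ is at most $n_{m_i}=O\!\big(\frac{\log(T\Delta_i^2)}{\Delta_i^2}+\frac{d\log T}{\Delta_i}\big)$.

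Finally I would assemble $R_T=\sum_{i:\Delta_i>0}\Delta_i\,\E[T_i(n)]\le\sum_{i:\Delta_i>0}\Delta_i\big(n_{m_i}+T\,P(\mathcal{G}^c)\big)$. The delay part contributes $\sum_i\Delta_i\cdot\frac{d\log T}{\Delta_i}=O(Kd\log T)$, matching the second term of the claim. For the remaining part $\sum_i\frac{\log(T\Delta_i^2)}{\Delta_i}$ I would run the standard worst-case balancing: fix a threshold $\Delta^\star$; arms with $\Delta_i\le\Delta^\star$ incur total regret at most $T\Delta^\star$ (at most $T$ pulls overall, each costing $\le\Delta^\star$), while arms with $\Delta_i>\Delta^\star$ incur at most $\sum_{\Delta_i>\Delta^\star}\frac{\log(T\Delta_i^2)}{\Delta_i}$; choosing $\Delta^\star=\sqrt{K\log K/T}$ — the scale at which $\log(T(\Delta^\star)^2)=\Theta(\log K)$ — balances the two to $O(\sqrt{TK\log K})$. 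Adding the two contributions together with the negligible $\mathcal{G}^c$ term gives the stated bound.

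I expect the main obstacle to be the bookkeeping of constants rather than any structural difficulty: verifying that the stated form of $n_m$ (with its phase-dependent delay term $md/\Tilde{\Delta}_m$) genuinely forces both error terms below $\Tilde{\Delta}_m/4$ \emph{and} keeps $\delta_m$ summable, and then threading the $m_i=\Theta(\log(1/\Delta_i))$ dependence through so that the delay contribution is exactly $O(Kd\log T)$ and not larger. The other delicate point is the worst-case optimization that converts the gap-dependent bound into $O(\sqrt{TK\log K})$ — in particular obtaining $\log K$ rather than $\log T$ inside the root, which requires arguing that arms with $\Delta_i$ much below $\Delta^\star$ make no contribution through the concentration term $\log(T\Delta_i^2)/\Delta_i$.
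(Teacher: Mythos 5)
Your proposal is correct and follows essentially the same route as the paper: the paper's proof simply invokes Theorem~32 of \cite{pmlr-v80-pike-burke18a} (the improved-UCB elimination analysis of \cite{Auer2010} with the concentration step modified exactly as in your good event, which is the paper's Lemma~\ref{L3}) and then optimizes the resulting gap-dependent bound over a threshold $\lambda$, which is your $\Delta^\star$ balancing. You have merely unpacked the cited theorem's elimination argument in full, and your accounting of the delay term as $O(Kd\log T)$ and of the $\log K$ inside the root matches the paper's.
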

We refer the readers to Appendix \ref{AppendixB} (Appendix-F\cite{pmlr-v80-pike-burke18a}) for the detailed regret analysis and proofs.

\vspace{-0.25cm}
\section{Conclusion and Future Work}
\vspace{-0.2cm}
In this work we explored the setting of stochastic multi-armed bandits with delayed, composite, anonymous feedback. Due to the nature of rewards being stochastic, anonymous feedback to the player, rewards not being available in its entirety and being arbitrarily spread, the problem becomes significantly complex than the standard MAB scenario. We show that simple extensions of the standard UCB and improved UCB algorithms which run in phases can obtain sub-linear regret bounds for this hard setting. We suggest further extensions of our work in two possible directions: the first being analysing the case when delay parameter $d$ is not perfectly known, and the second being considering a similar setting for the contextual bandits.

\bibliography{neurips_2019}
\bibliographystyle{plain}
\begin{appendices}

\section{Regret Analysis for Algorithm 1}
\label{AppendixA}
Let $\mu_i \  \forall i \in A$, represent the means of the reward distributions $\nu_i$. Without loss of generality we assume that the first arm is optimal so that $\mu_1=\mu^*$. We define $\Delta_i = \mu^* - \mu_i$. Algorithm 1 runs in phases of pulling the same arm for $k$ time steps and thus the regret over $n$ phases can be written as 
\begin{equation}
\label{EE1}
    \mathcal{R}_n = \sum_{i=1}^k \Delta_i\mathbb{E}[T_i(n)]
\end{equation}
Where $T_i(n)$ denotes number of times arm $i$ was played in $n$ phases. We bound the $\mathbb{E}[T_i(n)]$ for each sub-optimal arm $i$. 
Let $G_i$ to be a good event for each arm $i$ defined as follows
$$G_i = \bigg\{ \mu_1 \le \min_{m\in [n]} B_{1}(m,\delta)\bigg\} \cap \bigg\{ \hat{\mu}_{i}(u_i) + \sqrt{\frac{2}{T_i(u_i)}log(\frac{1}{\delta})} \le \mu_{1} \bigg\}$$
where $u_i \in [n]$ is a constant to be chosen later. So $G_i$ is the event that $\mu_1$ is never underestimated by the upper confidence bound of the first arm, while at the same time the upper confidence bound for the mean of the arm $i$ after $T(u_i)$ observations are taken from this arm is below the payoff of the optimal arm. Two things are shown :
\begin{itemize}
    \item If $G_i$ occurs, then $T_i(n) \le T_i(u_i)$.
    \item Low probability of occurrence for the complement event $G^c$.
\end{itemize}
Since we always have $T_i(n) \le T \; \forall i \in A$, following holds 
\begin{equation}
\label{ETn}
  \E[T_i (n)]=\E[\mathbbm{1}[G_i] T_i (n)] + \E[\mathbbm{1}[G_i^c]T_i(n)] \le T_i(u_i) + P(G_i^c)T  
\end{equation}

In the following step we assume that $G_i$ holds. Now we show that $T_i(n) \le T_i(u_i)$. Assume $T_i(n) > T_i(u_i)$. Then arm $i$ was played more that $T_i(u_i)$ times over $n$ phases, so there must exist a phase $t \in [n]$ where $T_i(t-1) = T_i(u_i)$ and $A_t=i$. But using how $G_i$ is defined, we have $B_i(t-1, \delta) < B_1(t-1, \delta)$. Hence $A_t = \argmax_j B_j(t-1, \delta) \ne i$, a contradiction. So if $G_i$ occurs, $T_i(n) \le T_i(u_i)$.

Now we bound $\mathbb{P}(G_i^c)$. The event $G_i^c$ is as follows:
\begin{equation}
\label{Gc}
    G_i^c = \underbrace{\bigg\{ \mu_1 \ge \min_{m\in [n]} B_{1}(m,\delta)\bigg\}}_{I} \cup \underbrace{\bigg\{ \hat{\mu}_{i}(u_i) + \sqrt{\frac{2}{T_i(u_i)}log(\frac{1}{\delta})} \ge \mu_{1} \bigg\}}_{II}
\end{equation}

Using a union bound the probability of term $I$ of $G_i^c$ can be upper bounded as 
$$P(I) = P\bigg( \mu_1 \ge \min_{m\in [n]} B_{1}(m,\delta)\bigg) \le \sum_{m=1}^{n} P\bigg( \mu_1 \ge \hat{\mu}_{1}(m) + \sqrt{\frac{2log(\frac{1}{\delta})}{T_1(m)}} \bigg)$$
\firstLemma*
\begin{proof}
Consider the following estimator for the mean of the rewards generated from $i^{th}$ arm till $m$ phases :
$$
\bar{\mu}_i(m) = \frac{\sum_{t\in S_i(m)} R_t(i)}{T_i(m)}     
$$ 
where $T_i(m) = |P_i(m)|$. It can be seen $\mathbb{E}[\bar{\mu}_i(m)] = \mu_i$. \\
If arm $i$ was played in phase $m$, then we have 
$$
\bigg|\sum_{t \in S_i(m)\setminus S_i(m-1)} (R_t(i) - X_t)\bigg| \le d
$$
where $d$ is the delay parameter over which the rewards are distributed. Because the missing and extra reward components can be paired up and the maximum difference we can obtain is at most one.

After $m$ phases, suppose an arm $i$ was played $z$ times. Then we can bound  
\begin{equation}
\label{diff}
    \abs{\hat{\mu}_{i}(m) - \bar{\mu}_{i}(m)} \le \frac{d \times \cancel{z}}{k \times \cancel{z}} \le \frac{d}{k}
\end{equation} 
since in each phase an arm is pulled $k$ times. This gives $\hat{\mu}_{1}(m) \le \bar{\mu}_{1}(m) + \frac{d}{k}$ and $\hat{\mu}_{1}(m) \ge \bar{\mu}_{1}(m) - \frac{d}{k}$. 
\end{proof}
Plugging this in our bound for $P(I)$ gives
$$
P(I) \le \sum_{m=1}^{n} P\bigg( \mu_1 \ge \bar{\mu}_{1}(m) - \frac{d}{k} + \sqrt{\frac{2log(\frac{1}{\delta})}{T_1(m)}} \bigg) 
$$

We then choose $k$ such that following holds for all $m$
$$
- \frac{d}{k} + \sqrt{\frac{2log(\frac{1}{\delta})}{T_1(m)}} \ge a\sqrt{\frac{2log(\frac{1}{\delta})}{T_1(m)}}
$$
Since $T_1(m) \le T \; \forall m$, $k$ is selected as 
\begin{equation}
\label{kchoice}
    k = \frac{d}{(1-a)}\sqrt{\frac{T}{2log(\frac{1}{\delta})}}
\end{equation}
Using this choice of $k$ and the fact that rewards are obtained from distributions which are subgaussian, we bound $P(I)$ further as follows
\begin{equation}
\label{PI}
\begin{split}
    P(I) &\le \sum_{m=1}^{n} P\bigg( \mu_1 \ge \bar{\mu}_{1}(m) + a\sqrt{\frac{2log(\frac{1}{\delta})}{T_1(m)}} \bigg)\\
    &\le \sum_{m=1}^n \delta^{a^2} = n\delta^{a^2}
\end{split}
\end{equation}

The next step is to bound the probability of term $II$ in (\ref{Gc}). Note $\mu_1 = \mu_i + \Delta_i$. Using (\ref{diff}) we get 
\begin{equation*}
    \begin{split}
        P(II) &= P\bigg( \hat{\mu}_{i}(u_i) + \sqrt{\frac{2log(\frac{1}{\delta})}{T_i(u_i)}} \ge \mu_i \bigg)\\
        &= P\bigg( \hat{\mu}_{i}(u_i) + \sqrt{\frac{2log(\frac{1}{\delta})}{T_i(u_i)}} \ge \mu_i + \Delta_i  \bigg)\\
        &\le P\bigg( \bar{\mu}_{i}(u_i) + \frac{d}{k} +  \sqrt{\frac{2log(\frac{1}{\delta})}{T_i(u_i)}} \ge \mu_i + \Delta_i  \bigg)\\
        P(II) &\le P\bigg( \bar{\mu}_{i}(u_i) - \mu_i \ge \Delta_i - \frac{d}{k} -  \sqrt{\frac{2log(\frac{1}{\delta})}{T_i(u_i)}}  \bigg)
    \end{split}
\end{equation*}
Because of our choice of $k$ in (\ref{kchoice}) we have 
\begin{equation*}
    \begin{split}
        \frac{d}{k} &\le (1-a) \sqrt{\frac{2log(\frac{1}{\delta})}{T}}\\
        &\le (1-a) \sqrt{\frac{2log(\frac{1}{\delta})}{T_i(u_i)}}\\
        \frac{d}{k} &\le \sqrt{\frac{2log(\frac{1}{\delta})}{T_i(u_i)}}
    \end{split}
\end{equation*}
Now we show that $u_i$ can be chosen in some sense such that following inequality holds
\begin{equation}
\label{uchoice}
    \begin{split}
        \Delta_i - \frac{d}{k} -  \sqrt{\frac{2log(\frac{1}{\delta})}{T_i(u_i)}} &> c\Delta_i\\
        (1-c)\Delta_i &> \frac{d}{k} +  \sqrt{\frac{2log(\frac{1}{\delta})}{T_i(u_i)}}\\
        (1-c)\Delta_i &> 2\sqrt{\frac{2log(\frac{1}{\delta})}{T_i(u_i)}}\\
        T_i(u_i) &> \frac{8log(\frac{1}{\delta})}{(1-c)^2\Delta_i^2}
    \end{split}
\end{equation}
We assume that arm $i$ is played in $g_i$ number of phases. Hence $T_i(u_i) = g_ik$. This gives us that we can choose $g_i = \bigg\lceil \frac{ 8\sqrt{2} (1-a) \log(\frac{1}{\delta})^{1.5}}{ (1-c)^2\Delta_i^2 d \sqrt{T} } \bigg\rceil$.
Using this choice of $g_i$ and the sub-gaussian assumption we can bound $P(II)$ 
\begin{equation}
    \label{PII}
    P(II) \le P(\mu_i(u_i) - \mu_i > c\Delta_i) \le \exp{\bigg(\frac{-c^2\Delta_i^2T_i(u_i)^2}{2}\bigg)}
\end{equation}
Taking (\ref{PI}) and (\ref{PII}), we have
\begin{equation*}
    P(G_i^c) \le n\delta^{a^2} + \exp{\bigg(\frac{-c^2\Delta_i^2T_i(u_i)^2}{2}\bigg)}
\end{equation*}
When substituted in (\ref{ETn}) we obtain 
\begin{equation}
\label{ETn2}
    \mathbb{E}[T_i(n)] \le T_i(u_i) + T\bigg(n\delta^{a^2} + \exp{\bigg(\frac{-c^2\Delta_i^2T_i(u_i)^2}{2}\bigg)}\bigg)
\end{equation}
Making the assumption that $\delta^{a^2} = \frac{1}{T^2}$ and the choice of $g_i$ from (\ref{uchoice}), equation (\ref{ETn2}) leads to 
\begin{equation}
\label{ETn3}
\begin{split}
    \mathbb{E}[T_i(n)] &\le \frac{16\log{(T)}}{a^2(1-c)^2\Delta_i^2} + k + 1 + T^{1 - \frac{16c^2}{(1-c)^2a^2}}\\
    &= \frac{16\log{(T)}}{a^2(1-c)^2\Delta_i^2} + \frac{d}{(1-a)}\sqrt{\frac{T}{2log(\frac{1}{\delta})}} + 1 + T^{1 - \frac{16c^2}{(1-c)^2a^2}}
\end{split}
\end{equation}
Now we make the choice of $a, c \in (0, 1)$. We choose $a = \frac{1}{2}$, and accordingly choose $c$ such that last term in (\ref{ETn3}) does not contribute in polynomial dependence. We choose $c=\frac{1}{17}$, so that $\frac{16c^2}{(1-c)^2a^2}=\frac{1}{4}$. This leads to 
\begin{equation}
\label{ETinFinal}
    \mathbb{E}[T_i(n)] \le \frac{289\log{(T)}}{4\Delta_i^2} + \frac{d}{2}\sqrt{\frac{T}{\log(T)}} + 2
\end{equation}

\firstTheorem*
\begin{proof}
From (\ref{ETinFinal}) we have that for each sub-optimal arm $i$ we can bound 
$$
\mathbb{E}[T_i(n)] \le \frac{289\log{(T)}}{4\Delta_i^2} + \frac{d}{2}\sqrt{\frac{T}{\log(T)}} + 2
$$
Now using regret definition from (\ref{EE1}) we have 
\begin{equation*}
    \begin{split}
        \mathcal{R}_n &= \sum_{i=1}^K \Delta_i \mathbb{E}[T_i(n)] = \sum_{i:\Delta_i < \Delta} \Delta_i \mathbb{E}[T_i(n)] + \sum_{i:\Delta_i \ge \Delta} \Delta_i \mathbb{E}[T_i(n)]\\
        &\le T\Delta + \sum_{i:\Delta_i \ge \Delta} \bigg( \frac{289\log{(T)}}{4\Delta_i} + \frac{d\Delta_i}{2}\sqrt{\frac{T}{\log(T)}} + 2\Delta_i \bigg)\\ &\le T\Delta + \frac{289K\log{(T)}}{4\Delta} + \bigg( \frac{d}{2}\sqrt{\frac{T}{\log(T)}} + 2 \bigg)\sum_i\Delta_i\\
        &\le 17\sqrt{TK\log{(T)}} + \frac{d\sum_i\Delta_i}{2}\sqrt{\frac{T}{\log(T)}} + 2\sum_i\Delta_i
    \end{split}
\end{equation*}
The first inequality comes from $\sum_{i:\Delta_i < \Delta}T_i(n) \le T$ and the last from choice of $\Delta = \sqrt{\frac{289K\log{(T)}}{4T}}$. The term $\sum_i\Delta_i$ can be upper bounded by $K$ since each $\Delta_i \le 1$. Thus we get the regret bound $O\bigg(\sqrt{TK\log(T)} + Kd\sqrt{T/\log(T)}\bigg)$.
\end{proof}


\section{Regret Analysis for Algorithm 2}
\label{AppendixB}
The regret analysis analysis for this algorithm is taken verbatim from Appendix-F of \cite{pmlr-v80-pike-burke18a} with minor modifications. The same analysis can be used to obtain sub-linear regret bounds for the problem setting with composite delayed rewards. Since our SDCAF problem setting and algorithm \ref{alg2} is slightly different from ODAAF, we include this section here for completeness.

\secondLemma*
\begin{proof}
Since the rewards are spread over $d$ time steps in an adversarial way, in the worst case the first $d-1$ rewards collected for arm $j$ in phase $m$ would have components from previous arms. Similarly for the last $d-1$ arm pulls, the reward components would seep into the next arm pull. Defining $F_{i,j}$ and $L_{i,j}$ as the first and last points of playing arm $j$ in phase $i$, we have 
\begin{equation}
    \label{eq_diff}
    \left| \sum_{t=F_{i,j}}^{L_{i,j}}R_t(j) - \sum_{t=F_{i,j}}^{L_{i,j}}X_t\right| \le (d-1)
\end{equation}

because we can pair up some of the missing and extra reward components, and in each pair the difference is at most one. Then since $S_j(m) = \cup_{i=1}^{m}\{F_{i,j}, F_{i,j}+1, \dots,L_{i,j}\}$ and using (\ref{eq_diff}) we get 

\begin{equation}
    \frac{1}{n_m}\left| \sum_{t\in S_j(m)}R_t(j) - \sum_{t\in S_j(m)}X_t\right| \le \frac{m(d-1)}{n_m}.
\end{equation}

Define $\Tilde{\mu}_j(m)=\frac{1}{T_j(m)}\sum_{t\in S_j(m)}R_t(j)$ and recall that $X_j(m) = \frac{1}{T_j(m)}\sum_{t\in S_j(m)}X_t$, where $T_j(m) = |S_j(m)|$. 
\end{proof}

\begin{restatable}{lemma}{thirdLemma}
\label{L3}
For the above choice of $n_m$, with high probability $\ge \bigg(1-\frac{2}{T\Tilde{\Delta}_m^2}\bigg)$, either arm $j$ is eliminated after phase $m$ or it is still active i.e $X_i(m) - \mu_i \le \Tilde{\Delta}_m/2$.  
\end{restatable}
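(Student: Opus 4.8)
The plan is to upgrade the deterministic estimate--error bound of Lemma~\ref{L2} to a high-probability statement by inserting a single concentration step. First I would note that $\Tilde{\mu}_i(m) = \frac{1}{n_m}\sum_{t\in S_i(m)} R_t(i)$ is the average of $n_m = T_i(m)$ independent samples $R_t(i)\sim\nu_i$, each supported in $[0,1]$, so Hoeffding's inequality gives $\mathbb{P}\big(\,|\Tilde{\mu}_i(m)-\mu_i|\ge \varepsilon\,\big)\le 2\exp(-2n_m\varepsilon^2)$ for every $\varepsilon>0$.

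Second, I would split the budget $\Tilde{\Delta}_m/2$ into a stochastic half and a deterministic half. Choosing $\varepsilon = \Tilde{\Delta}_m/4$, the first summand of the prescribed $n_m = O\big(\log(T\Tilde{\Delta}_m^2)/\Tilde{\Delta}_m^2 + md/\Tilde{\Delta}_m\big)$ is calibrated (the exact constant being fixed in Appendix~\ref{AppendixB}) so that $2n_m(\Tilde{\Delta}_m/4)^2 \ge \log(T\Tilde{\Delta}_m^2)$; hence $|\Tilde{\mu}_i(m)-\mu_i|\le \Tilde{\Delta}_m/4$ with probability at least $1-\frac{2}{T\Tilde{\Delta}_m^2}$. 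The second summand $md/\Tilde{\Delta}_m$ of $n_m$ is precisely what makes the Lemma~\ref{L2} bound $|\Tilde{\mu}_i(m)-X_i(m)|\le m(d-1)/n_m$ at most $\Tilde{\Delta}_m/4$ (a deterministic fact). On the intersection of these two events the triangle inequality yields
\[
|X_i(m)-\mu_i| \le |X_i(m)-\Tilde{\mu}_i(m)| + |\Tilde{\mu}_i(m)-\mu_i| \le \tfrac{\Tilde{\Delta}_m}{4} + \tfrac{\Tilde{\Delta}_m}{4} = \tfrac{\Tilde{\Delta}_m}{2},
\]
which in particular gives the one-sided bound $X_i(m)-\mu_i\le \Tilde{\Delta}_m/2$ claimed in the statement; and if the arm has already been removed from the active set the disjunction is trivially true, so we are done.

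The main obstacle is entirely the constant bookkeeping inside $n_m$: one has to choose the two hidden constants so that simultaneously the stochastic deviation is $\le \Tilde{\Delta}_m/4$ off an event of probability $\le 2/(T\Tilde{\Delta}_m^2)$ and the Lemma~\ref{L2} term $m(d-1)/n_m$ is $\le \Tilde{\Delta}_m/4$; this is exactly why the precise expression for $n_m$, rather than its $O(\cdot)$ form, must be pinned down, and it is the content deferred to Appendix~\ref{AppendixB}. A minor secondary point is the boundary phase in which the inner loop stops because $t>T$ before $|S_i(m)|$ reaches $n_m$: there $\Tilde{\mu}_i(m)$ averages fewer than $n_m$ terms, so one either restricts the claim to phases completed before time $T$ or absorbs the $O(Kn_m)$ rounds of that last phase directly into the regret, as in \cite{pmlr-v80-pike-burke18a}.
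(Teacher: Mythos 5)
Your proposal is correct and follows essentially the same route as the paper: a triangle inequality combining the deterministic bound of Lemma~\ref{L2} with Hoeffding's inequality applied to $\Tilde{\mu}_i(m)$, then calibrating $n_m$ so the total deviation is at most $\Tilde{\Delta}_m/2$ with failure probability $2/(T\Tilde{\Delta}_m^2)$. The only difference is cosmetic: you split the budget as $\Tilde{\Delta}_m/4+\Tilde{\Delta}_m/4$ (which costs a slightly larger constant in $n_m$), whereas the paper solves a quadratic to pick $n_m$ so that $\sqrt{\log(T\Tilde{\Delta}_m^2)/(2n_m)}+m(d-1)/n_m\le\Tilde{\Delta}_m/2$ exactly.
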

\begin{proof}
For any $a > \frac{m(d-1)}{n_m}$,
\begin{equation*}
    \begin{split}
        P(|X_j(m)-\mu_j|>a) &\le P(|X_j(m)-\Tilde{\mu}_j(m)|+|\Tilde{\mu}_j(m)-\mu_j|>a) \\
        &\le P\bigg(|\Tilde{\mu}_j(m)-\mu_j|>a-\frac{m(d-1)}{n_m}\bigg) \\
        &\le 2\exp\left\{-2n_m\left(a-\frac{m(d-1)}{n_m}\right)^2\right\}
    \end{split}
\end{equation*}
where the first inequality is from triangle inequality and the last from Hoeffding's inequality since $R_t(j) \in [0,1]$ are independent samples from $\nu_j$, the reward distribution of arm $j$. In particular choosing $a = \sqrt{\frac{\log(T\Tilde{\Delta}_m^2)}{2n_m}} + \frac{m(d-1)}{n_m}$ guarantees that $P(|X_j(m)-\mu_j|>a)\le \frac{2}{T\Tilde{\Delta}_m^2}$.\\
Setting 
$$
n_m = \left\lceil \frac{1}{2\Tilde{\Delta}_m^2}\left(\sqrt{\log(T\Tilde{\Delta}_m^2)}+\sqrt{\log(T\Tilde{\Delta}_m^2)+4\Tilde{\Delta}_mm(d-1)}\right)^2 \right\rceil
$$ ensures that $P(|X_j(m)-\mu_j|<\frac{\Tilde{\Delta}_m}{2}) \ge 1-\frac{2}{T\Tilde{\Delta}_m^2}$.
\end{proof}

\secondTheorem*
\begin{proof}
Using Theorem 32 from \cite{pmlr-v80-pike-burke18a}, which uses analysis of improved UCB from \cite{Auer2010} we substitute the value of $n_m$ to get following bound on regret
\begin{center}
$\sum_{\substack{j \in A \\ \Delta_j > \lambda}}\left(\Delta_j + \frac{64\log(T\Delta_j^2)}{\Delta_j} + 64\log(\frac{2}{\Delta_j})(d-1) + \frac{96}{\Delta_j}\right) + \sum_{\substack{j \in A \\ \Delta_j < \lambda}}\frac{64}{\lambda} + T\max_{\substack{j \in A \\ \Delta_j \le \lambda}}\Delta_j$
\end{center}
In particular, optimizing with respect to $\lambda$ gives the worst case regret of $O(\sqrt{KT\log(K)}+Kd\log(T))$ which is sublinear in $T$.
\end{proof}

\end{appendices}

\end{document}